\pdfoutput=1 
\documentclass[letterpaper,oneside,11pt]{article}
\usepackage{pslatex}               
\usepackage[margin=1in]{geometry}   
\setlength{\headsep}{0in}           

\usepackage{bm}                  
\usepackage{url}                 
\usepackage{soul}                
\usepackage{float}               
\usepackage{natbib}
\usepackage{amssymb}             
\usepackage{optprog}             
\usepackage{graphicx}            
\usepackage{pdflscape}           
\usepackage{subfigure}           
\usepackage[tbtags]{amsmath}     

\usepackage{enumitem}            
   
   \newenvironment{compenum}{\begin{enumerate}[topsep=0pt,itemsep=0pt,parsep=0pt,partopsep=0pt]}{\end{enumerate}}

\usepackage{amsthm}              
   \theoremstyle{plain}
   \newtheorem{theorem}{Theorem}             
   \newtheorem{lemma}[theorem]{Lemma}
   
   \newtheorem{definition}[theorem]{Definition}

\usepackage[hang,small,bf,margin=20pt,position=top]{caption}
   \setlength{\abovecaptionskip}{0pt}

\renewcommand{\a}{\bm{a}}
\renewcommand{\b}{\bm{b}}
\renewcommand{\c}{\bm{c}}
\renewcommand{\o}{\omega}

\newcommand{\x}{\bm{x}}
\newcommand{\z}{\bm{z}}

\newcommand{\nn}{\phantom{0}}

\newcommand{\abs}[1]{\left| #1 \right|}
\newcommand{\sign}{\operatorname{sign}}
\renewcommand{\th}{\ensuremath{^{th}\ }}        
\newcommand{\set}[1]{\ensuremath{\left\{ #1 \right\}}}
\newcommand{\ceil}[1]{\left\lceil #1 \right\rceil}
\newcommand{\floor}[1]{\left\lfloor #1 \right\rfloor}
\newcommand{\Z}{\mathbb{Z}}

\newcommand{\rem}[1]{}


\newcommand{\sA}{\mathcal{A}}                
\newcommand{\sB}{\mathcal{B}}                
\newcommand{\sC}{\mathcal{C}}                
\newcommand{\intersection}{\bigcap}

\newcommand{\ap}{\a|_{(\sA\intersection \sB)}}
\newcommand{\bp}{\b|_{(\sA\intersection \sB)}}
\newcommand{\pte}{|_{(\sA\intersection \sB\intersection \sC)}} 

\newcommand\xc{\bm{x}^{(c)}}
\newcommand\xo{\bm{x}^{(o)}}
\newcommand\xdc{\bm{x}^{(cat)}}
\newcommand\xdo{\textsf{rank}\bigl(\xdc\bigr)}

\newcommand{\rank}[1]{\textsf{rank}\bigl(#1\bigr)}

\begin{document}
\title{Joint aggregation of cardinal and ordinal evaluations with an application to a student paper competition}
\author{Dorit S. Hochbaum \\
      Department of Industrial Engineering and Operations Research \\
      University of California, Berkeley \\
      email: {\tt hochbaum@ieor.berkeley.edu}\\
      Erick Moreno-Centeno \\
      Department of Industrial and Systems Engineering \\
      Texas A\&M University \\
      email: {\tt emc@tamu.edu}\\
}
\maketitle 

\begin{abstract}
An important problem in decision theory concerns the aggregation of individual
rankings/ratings into a collective evaluation. We illustrate a new aggregation
method in the context of the 2007 MSOM's student paper competition. The
aggregation problem in this competition poses two challenges. Firstly, each
paper was reviewed only by a very small fraction of the judges; thus the
aggregate evaluation is highly sensitive to the subjective scales chosen by the
judges. Secondly, the judges provided both cardinal and ordinal evaluations
(ratings and rankings) of the papers they reviewed. The contribution here is a
new robust methodology that jointly aggregates ordinal and cardinal evaluations
into a collective evaluation. This methodology is particularly suitable in
cases of incomplete evaluations---i.e., when the individuals evaluate only a
strict subset of the objects. This approach is potentially useful in managerial
decision making problems by a committee selecting projects from a large set or
capital budgeting involving multiple priorities.

\textit{Keywords:} Consensus ranking, group ranking, student paper competition,
decision making, incomplete ranking aggregation, incomplete rating aggregation.
\end{abstract}

\section{Introduction}\label{sec:introduction}
We present here a new framework for group decision making in which a group of
individuals, or judges, collectively ranks all of the objects in a universal
set.  This framework takes into consideration the pairwise comparisons implied
by the individuals' evaluations, and furthermore, it is first to combine
ordinal rankings with cardinal ratings so as to achieve an aggregate ranking
that represents as well as possible the individuals' assessments, as measured
by pre-set penalty functions.

Group-ranking problems are differentiated by whether the evaluations are given
in ordinal or cardinal scales. An ordinal evaluation, or \textit{ranking}, is
one where the objects are ordered from ``most preferred'' to ``least
preferred'' in the form of an ordered list (allowing ties).  On the other hand,
a cardinal evaluation, or \textit{rating}, is an assignment of scalars, which
are cardinal scores/grades, to the objects evaluated. In a rating, the
difference between the scores of two objects indicates the magnitude of
separation between such objects. Depending on the type of evaluations to be
aggregated, group-ranking problems are referred to  as \textit{ranking
aggregation problem} or \textit{rating aggregation problems}.

Previous work addressed either the rankings alone aggregation problem (e.g.
\citealt{KS62,Arr63,BTT89,HL06}), or the ratings alone aggregation problem
(e.g. \citealt{Kee76,Saa77}), but not both. One of the primary contributions
here is the technique that permits to jointly aggregate rankings and ratings
into a collective evaluation.
The individual evaluations input to a group-ranking problem can be
\textit{complete} or \textit{incomplete}. In cases when each individual in the
group ranks (rates) all of the objects in the universal set, then the ranking
(rating) is said to be complete, or {\em full list}; otherwise, it is said to
be incomplete, or {\em partial list}.
The framework developed here is applicable when the judges' ratings and
rankings are incomplete.

The power of the framework developed here is illustrated in ranking the
participants of the 2007 MSOM's student paper competition (\textbf{SPC}). This
SPC aggregate ranking problem poses challenges that are unique to that
scenario:
\begin{compenum}
\item The judges provided both ratings and rankings of the papers they
    reviewed.  This requires to reconcile the possibly conflicting two types of
    evaluations.
\item The incompleteness of the evaluations was extreme: Each judge evaluated
    fewer than a tenth of the papers, and each paper was reviewed by fewer than
    a tenth of the judges. This caused the aggregation to be subject to the
    ``incomplete evaluation'' phenomenon bias, in which the individual scales
    used by the judges affect the average scores, even if the preference
    ordering of all the judges agree with each other. Also, outlier scores that
    are too low or too high tend to dominate the aggregate score of the papers.
\end{compenum}

%

The issue of subjective scales is well recognized within the aggregate ranking
literature. \citet{Fre88}\rem{Ch. 8, page 296} argues that, the \emph{value
difference functions} (the rating scales) of two individuals involve an
arbitrary choice of scale and origin and thus the same numeric score from two
different judges generally do not have the same meaning. Similarly, in the
context of international surveys, a large number of studies (see, for example,
\citealt{BS01,Smi04,Har06}) show that the responses across different countries
do not have the same meaning. In particular, these studies showed that even
when asking respondents to rate each object using a simple 5-point rating
scale, there are significant differences in the \textit{response styles}
between countries. One example of a difference in response style, that arises
even when using a simple 5-point rating scale, is that in some countries there
is a tendency to use only the extreme categories while in others there is a
tendency to use only the middle categories. Another example of a difference in
response style, is that in some countries there is a tendency to use only the
top categories while in others there is a tendency to use only the bottom
categories.

In a decision making set up when the judges provide scores, one can generate
{\em implied pairwise comparisons} that reflect the intensity of the
preference.  This is done by letting this intensity be the difference in the
scores for the two respective objects (these are called {\em additive}
comparisons further discussed later).  \citet{HL06} demonstrated that an
aggregate rating that minimizes the penalties for differing from the individual
judges' implied {\em pairwise comparisons} overcomes the issue of using
different parts of the scale and is less sensitive to subjective scales than
the use of cardinal scores alone.  These type of penalties are called {\em
separation penalties}, and the optimization problem that seeks to assign scores
that minimize the total separation penalties is called the {\em separation
problem}. The separation-deviation (SD) model, proposed in
\citep{Hoc04,Hoc06,HL06}, considers an aggregate rating scenario where the
input to the rating process is given as \emph{separation gaps} and
\emph{point-wise scores}. A separation gap is a quantity that expresses the
intensity of the preference of one object $i$ over another $j$ by one
particular judge. A point-wise score is a cardinal score of an object. The SD
optimization problem combines the objective of minimizing the penalties of the
deviation of the assigned scores to the point-wise scores assigned by judges to
each object and the minimization of the separation penalties. For any choice of
penalty functions the aggregate rating obtained by solving the SD model is a
complete-rating that minimizes the sum of penalties on deviating from the given
point-wise scores and separation gaps. The SD model is solved in polynomial
time if the penalty functions are convex.  It is NP-hard otherwise.

%

In our problem setting the judges provided only point-wise scores.  Therefore
there are no pairwise comparisons provided directly.  Instead we use here the
pairwise comparisons {\em implied} by the scores.  The mechanism we propose
here uses the SD model for both the rankings and ratings provided by the
judges.  For the rankings the penalty functions proposed are not convex.  We
``convexify" those functions and attain an optimization model that {\em
combines} the separation and deviation penalties for deviating from the
rankings and from the ratings of all judges. This is the first aggregate
decision model that combines both ordinal and cardinal inputs.


The advantages of the mechanism proposed are obvious in comparison to standard
approaches.  It is easy to recognize a discrepancy in scores given to the same
object by different judges.  However, it is possible that the scores given are
very close, yet each one is assigned from a different subjective scale.  For
one judge the score of 7 out of 10 can indicate the top evaluation, whereas for
another it may mean the very bottom.  Such scale differences cannot be
identified by considering the variance of the scores alone.  Our optimal
solution to the SD problem, with the given penalty functions, allow to identify
immediately the largest penalty pairs which, if large enough, indicate that
different judges disagreed significantly on the comparison between such pairs
of objects.  This permits to identify inconsistencies and outliers that could
be judges who are too lenient or too strict, or for other reasons had intensity
of preference substantially different from the others.  As such the methodology
proposed not only provides an aggregate ranking, but also clarifies the
disagreements and inconsistencies that allow to go back and possibly
investigate the reasons for those outliers.



The paper is organized as follows: Section \ref{sec:literature} provides a
literature review on some relevant aggregate group-decision making techniques
for rankings and ratings aggregation. Section \ref{sec:data} describes the
evaluation methodology used in the 2007 MSOM's SPC, and gives examples where
the differences in scale used by the judges are evident. Section
\ref{sec:method} reviews the models and distance metrics used to construct the
penalty functions and defines the notions of consensus ranking and consensus
rating used here. Section \ref{sec:jointAggregation} describes the methodology
for the combined use of the given ratings and rankings in order to obtain the
aggregate ranting-ranking pair. Section \ref{sec:results} uses the methodology
presented in Section \ref{sec:method} to rank the contestants in the 2007
MSOM's SPC and analyzes the obtained results. Finally, Section
\ref{sec:conclusion} provides comments on our group-decision making framework
and its usefulness for different applications and decision-making scenarios.


\section{Literature Review}\label{sec:literature}
The ranking aggregation problem has been studied extensively, especially in the
social choice literature. In this context, one of the most celebrated results
is Arrows's impossibility theorem \citep{Arr63}, which states that there is no
``satisfactory'' method to aggregate a set of rankings. Kenneth Arrow defined a
satisfactory method as one that satisfies the following properties: universal
domain, no imposition, monotonicity, independence of irrelevant alternatives,
and non-dictatorship.

\citet{KS62} proposed a set of axioms that a distance metric between two
complete rankings should satisfy. They proved that these axioms were jointly
satisfied by a unique metric distance. This distance between two rankings is
measured by the number of {\em rank reversals} between them. A rank reversal is
incurred whenever two objects have a different relative order in the given
rankings. Similarly, {\em half} a rank reversal is incurred whenever two
objects are tied in one ranking but not in the other. Kemeny and Snell defined
the consensus ranking as the ranking that minimizes the sum of the distances to
each of the input rankings. \citet{BTT89} showed that the optimization problem
that needs to be solved to find the Kemeny-Snell consensus ranking is NP-hard.

Following the work of Kemeny-Snell, several axiomatic approaches have been
developed to determine consensus. For instance, \citet{Bog73} developed an
axiomatic distance between partial orders. One of the applications of Bogart's
distance is to determine a consensus partial order from a set of partial
orders. \citet{MC10} developed an axiomatic distance
between incomplete rankings that is used here. 

The difficulties presented by Arrow's impossibility theorem and the NP-hardness
of finding the Kemeny-Snell's consensus ranking can be overcome by replacing
ordinal rankings by (cardinal) ratings. Following this direction, \citet{Kee76}
proved that the \emph{averaging method} satisfied all of Arrow's desirable
properties. In the averaging method, the consensus rating of each object is the
average of the scores it received. The most immediate drawback of this approach
is that the averaging method implicitly requires that all judges use the same
rating scale; that is, that all individuals are equally strict or equally
lenient in their score assignments.  This work also ignores the aspect of
pairwise comparisons, which is essential to the Kemeny-Snell model.

Pairwise comparisons intensities are the input to Saaty's Analytic Hierarchy
Process technique \citep{Saa77}.  There, the optimal scores are found by the
principal eigenvector technique.  The readers are referred to \citep{Hoc10} for
an analysis of the principal eigenvector method in the context of aggregate
decision making.

The separation-deviation model of \citep{Hoc04,Hoc06,HL06} addresses the
computational shortcomings of the Kemeny-Snell model, and the decision quality
inadequacies of the principal eigenvector method. This model takes point-wise
scores and potentially also pairwise comparison as inputs.  It is the building
block of the mechanism proposed here.  As pointed out above, the respective
separation-deviation optimization problem is solvable in polynomial time if all
the penalty functions are convex \citep{HL06}.


The rating aggregation problem has also been studied in the context of
multi-criteria decision making literature. \citet{HL06} showed the equivalence
between the rating aggregation problem and the multi-criteria decision making
problem. In this context, the non-axiomatic ELECTRE \citep{BRV75} and PROMETHEE
\citep{BV85} methods (and their extensions) solve the rating aggregation
problem that arises from a multi-criteria decision problem by transforming it
in some sense to a ranking aggregation problem. This transformation is claimed
to be needed because each criterion is evaluated on a different scale.

\section{The Data}\label{sec:data}
The data used here is the evaluations for the 2007 MSOM's SPC. There were 58
papers submitted to the competition and 63 judges participated in the
evaluation process. Each of the 63 judges evaluated only three to five out of
the 58 papers; and each of the 58 papers was evaluated by only three to five
out of the 63 judges. Each judge reviewed and evaluated the assigned papers on
the attributes (scale):


\renewcommand{\labelenumi}{\Alph{enumi})}
\begin{compenum}
\item Problem importance/interest (1--10),
\item Problem modeling (0--10),
\item Analytical results (0--10),
\item Computational results (0--10),
\item Paper writing (1--10), and
\item Overall contribution to the field (Field contribution, for short)
    (1--10).
\end{compenum}
\renewcommand{\labelenumi}{\arabic{enumi}.}
On each attribute, the judges assigned scores according to the score guidelines
provided (see Table \ref{tab:interpretations}). In addition, each judge also
provided an ordinal ranking of the papers he/she reviewed (1 = best, 2 = second
best, etc.).

\begin{table}[H]
\centering \caption{Interpretation of each numerical score. The journals
considered are: MSOM, Operations Research (OR) and Management Science
(MS).}\label{tab:interpretations} \footnotesize\begin{tabular}{c|l} {\bf Score}
& {\bf Definition / Interpretation} \\ \hline
10 & Attribute considered is comparable to that of the best papers published in the journals. \\
8,9 & Attribute considered is comparable to that of the average papers published in the journals. \\
7 & Attribute considered is at the minimum level for publication in the journals. \\
5,6 & Attribute considered independently would require a minor revision before publication in the journals. \\
3,4 & Attribute considered independently would require a major revision before publication in the journals. \\
1,2 & Attribute considered would warrant by itself a rejection if the paper were submitted to the journals. \\
0 & Attribute considered is not relevant or applicable to the paper being
evaluated.
\end{tabular}
\end{table}

Although precise score interpretations were provided to the judges (Table
\ref{tab:interpretations}), they nevertheless appeared to have differed
significantly in their evaluation and must have interpreted the scale
differently.  Examples of this phenomenon are illustrated for paper 43, in
Table \ref{tab:notcomp43}, and paper 26, in Table \ref{tab:notcomp26}. To
maintain the anonymity of judges and papers the judge and paper identification
numbers were assigned randomly.
\begin{table}[H]
\centering \caption{\small Evaluations received on paper
43.}\label{tab:notcomp43} \tiny\begin{tabular}{c|cccccccc}
 {\bf Judge}  & {\bf Problem} & {\bf Problem} & {\bf Analytical} & {\bf Computational} & {\bf Paper} & {\bf Field} & {\bf Paper} \\
 {\bf      }  & {\bf Importance} & {\bf Modeling} & {\bf Results} & {\bf Results} & {\bf Writing} & {\bf Contribution} & {\bf Ranking} \\ \hline
        47    &          9 &          8 &          8 &          8 &          9 &          9 &          1 \\
      \nn6    &          6 &          4 &          2 &          4 &          4 &        4.5 &          1 \\
        55    &          9 &          6 &          0 &          9 &          8 &          6 &          2 \\
      \nn2    &          7 &          7 &          2 &          6 &        7.5 &          4 &          3
\end{tabular}
\end{table}

A detailed examination of Table \ref{tab:notcomp43} illustrates that paper 43
received in the Problem Modeling category a score of 8 by one judge (meaning
that the Problem Modeling in the paper is comparable to that in an average
paper published in MSOM, OR and MS), and a score of 4 by other judge (meaning
that the problem modeling in the paper requires a major revision before
publication in MSOM, OR and MS). These score differences are not insignificant.
Another example of the differences between the judges' evaluations is found on
the Analytical Results category. In this category, a judge gave a score of 8
(meaning that the analytical results in the paper are comparable to those in an
average paper published in MSOM, OR and MS), two judges gave a score of 2
(meaning that the analytical results in the paper are so bad that the paper
should be rejected by MSOM, OR and MS), and the remaining judge considered that
the category was not applicable to the paper (thus assigned the value of zero).

\begin{table}[H]
\centering \caption{\small Evaluations received on paper
26.}\label{tab:notcomp26} \tiny\begin{tabular}{c|cccccccc}
 {\bf Judge}  & {\bf Problem} & {\bf Problem} & {\bf Analytical} & {\bf Computational} & {\bf Paper} & {\bf Field} & {\bf Paper} \\
 {\bf      }  & {\bf Importance} & {\bf Modeling} & {\bf Results} & {\bf Results} & {\bf Writing} & {\bf Contribution} & {\bf Ranking} \\ \hline
        21    &          8 &         10 &          8 &          8 &          5 &          8 &          3 \\
        24    &          8 &          9 &          8 &         10 &          7 &          8 &          1 \\
        14    &          7 &          2 &          3 &          2 &          2 &          2 &          5 \\
        26    &          8 &          8 &          7 &          8 &          8 &          7 &          3 \\
        49    &         10 &          7 &          6 &          9 &          9 &          8 &          1
\end{tabular}
\end{table}
In Table \ref{tab:notcomp26} the data shows that judge 14's evaluations were
not on the same scale as the evaluations of the other judges. In particular,
in all attributes (with the exception of Problem Importance) judge 14 gave a
score indicating that the paper would be rejected by MSOM, OR and MS; on the
other hand in every attribute all of the other judges considered the paper is
worth of publishing (some of their evaluations even indicate that the paper
would be among the best papers published in MSOM, OR and MS!). Such
discrepancies in the judges' evaluations are quite common throughout the data.

Henceforth, we use as the input point-wise ratings the (cardinal) scores only
on the attribute ``Overall Contribution to the Field'' (``Field Contribution'',
for short). This is because the authors and the head judge of the 2007 MSOM's
SPC, believe that, among all the attributes that were scored according to the
cardinal scale in Table \ref{tab:interpretations} (i.e., excluding the ordinal
paper ranking), this attribute is the single most important attribute
evaluated.

\section{Preliminaries}\label{sec:method}
This section gives the notation used throughout the rest of the paper, and
reviews the concepts of: the separation-deviation (SD) model, distance between
incomplete ratings, and distance between incomplete rankings.

\subsection{Notation}
Let $V$ be the ground set of $n$ objects to be rated; without loss of
generality, we assign a unique identifier to each element so that
$V=\set{1,2,...,n}$.  The judges are $K$ individuals. Each judge $k$,
$k\in\set{1,2,...,K}$, provides a set of scores, or ratings vector, $\a^{(k)}$
of the objects in a subset $\sA^{(k)}$ of $V$.  Thus $a^{(k)}_j$ is the score
of object $j$ by the k\th individual, and $a^{(k)}_j$ is undefined if the k\th
individual did not rate object $j$.  Without loss of generality, we assume that
the scores are integers contained in a pre-specified interval $[\ell,u]$. The
\emph{range} of the ratings is defined as $R\equiv u-\ell$.

We say that judge's $k$ {\em implied pairwise comparison}, or {\em separation
gap} of $i$ to $j$ is $p^{(k)}_{ij}$ where

\[p^{(k)}_{ij}=\begin{cases}
a^{(k)}_i-a^{(k)}_j & \text{if }i\in \sA^{(k)} \text{ and } j\in \sA^{(k)} \\
\text{undefined} & \text{otherwise.}\end{cases}\]


Analogously, in the ordinal setting of the incomplete-ranking aggregation
problem, each judge $k$ provides an incomplete ranking $\b^{(k)}$ of the
objects in $\sB^{(k)}$, a subset of $V$. Here $b^{(k)}_i$ is the rank (an
ordinal number) of object $i$ in the ranking provided by the k\th individual,
and $b^{(k)}_i$ is undefined if individual $k$ did not rank object $i$.

The implied separation gaps for ordinal rankings are
$\sign(b^{(k)}_i-b^{(k)}_j)$ for $i,j\in \sB^{(k)}$, where the $\sign$ function
is defined as:
\[\sign (x)=\begin{cases}
-1 & \text{if } x<0 \\
0  & \text{if } x=0 \\
1 & \text{if } x>0.
\end{cases}\]

For a vector of scores, or ratings, $\a$ of a set of objects, we denote by
$\rank{\a}$ the complete ranking of those objects obtained by sorting the
objects according to their scores in $\a$.  For example, the vector of scores
$(4.5, 5, 3, 2.7, 3)$ corresponds to the ranking $(2,1,3,5,3)$.

\subsection{Review of the Separation-Deviation Model}\label{sec:sepdev}
The SD model can be applied to group-decision making problems where the input
is given as pairwise comparisons and/or point-wise scores.
In the model formulation, the variable $x_i$ is the aggregate score of the i\th
object, and the variable $z_{ij}$ is the aggregate separation gap of the i\th
over the j\th object.  The separation gaps must be consistent.  A set of
separation gaps, $p_{ij}$, is said to be {\em consistent} if and only if for
all triplets $i,j,k$, $p_{ij}+p_{jk}=p_{ik}$. In \citep{Hoc10,HL06} it was
proved that the consistency of a set of separation gaps is equivalent to the
existence of a set of scores $\o_i$ for $i=1,\dots,n$ so that
$p_{ij}=\o_i-\o_j$.

 The mathematical programming formulation of the
SD model is:
\begin{subequations}\label{eqn:sepdev}
\begin{optprog}
(SD)$\qquad$ \optaction[\x,\z]{min} & \objective{\sum_{k=1}^K \sum_{i=1}^n \sum_{j=1}^n f^{(k)}_{ij}(z_{ij}-p^{(k)}_{ij}) + \sum_{k=1}^K \sum_{i=1}^n g^{(k)}_i(x_i-a^{(k)}_i)} \\
subject to & z_{ij}&  =   & x_i-x_j    & i=1,\dots,n; & j=1,\dots,n \label{eqn:c1} \\
           &  \ell & \leq & x_i \leq u & i=1,\dots,n  \\
           &  x_i  & \in  & \Z         & i=1,\dots,n.
\end{optprog}
\end{subequations}
The function $f^{(k)}_{ij}(\cdot)$ penalizes the difference between the
aggregate separation gap of the object pair $(i,j)$ and the k\th reviewer's
separation gap of the object pair $(i,j)$. The function $g^{(k)}_{i}(\cdot)$
penalizes the difference between the aggregate score of object $i$ and the k\th
reviewer's score of object $i$. In order to ensure polynomial-time solvability,
the functions $f^{(k)}_{ij}(\cdot)$ and $g^{(k)}_i(\cdot)$ must be
\textbf{convex}. In the context of rating aggregation, the penalty functions
assume the value $0$ for the argument $0$; meaning that if the output
separation gap for $i$ an $j$, $z_{ij}$ agree with $p^{(k)}_{ij}$ then
$f^{(k)}_{ij}(z_{ij}-p^{(k)}_{ij})=$ $f^{(k)}_{ij}(0)=$ $0$. If $i\not\in
\sB^{(k)}$, then $g^{(k)}_i(\dot)$ is set to the constant function $0$;
similarly, if at least one of $i$ or $j$ $\not\in \sB^{(k)}$, then
$f^{(k)}_{ij}(\dot)$ is set to the constant function $0$. Constraints
\eqref{eqn:c1} enforce the consistency of the aggregate separation gaps
conforming to the aggregate rating.

It was proved in \citep{Hoc04,Hoc06,HL06} that problem (SD) is a special case
of the convex dual of the minimum cost network flow (CDMCNF) problem. The most
efficient algorithm known for the CDMCNF has a running time of
$O(mn\log\frac{n^2}{m}\log(u-\ell))$ \citep{AHO03}, where $m$ is the total
number of given separation gaps, and $n$ is the number of objects.
\citet{AHO04} presented an alternative algorithm that uses a minimum-cut
algorithm as a subroutine.


\subsection{Distance between Incomplete-Ratings}\label{sec:IncompleteRatingAggregation}
Defining a penalty function on separation gaps is equivalent to quantifying the
distance between them. \citet{CK85} proposed a distance between complete
ratings. This distance function was adapted to incomplete ratings in
\citep{MC10}. It was shown that for a set of desirable properties this
adaption, called \textit{normalized projected Cook-Kress distance} (NPCK), is
the only one that satisfies all those properties.

Given two incomplete ratings $\a^{(1)}$ and $\a^{(2)}$, the NPCK distance
between the implied separation gaps is
\begin{equation}\label{eqn:npck}
d_{NPCK}(\a^{(1)},\a^{(2)})=\mathcal{C}\sum_{i\in \sA^{(1)}\bigcap \sA^{(2)}}\sum_{j\in \sA^{(1)}\bigcap \sA^{(2)}}
|p^{(1)}_{ij}-p^{(2)}_{ij}|,
\end{equation}
where
\begin{equation}\label{eqn:CteRate}
\mathcal{C}=\left(4\cdot R\cdot \ceil{\frac{\abs{\sA^{(1)}\bigcap
\sA^{(2)}}}{2}}\cdot\floor{\frac{\abs{\sA^{(1)}\bigcap \sA^{(2)}}}{2}}\right)^{-1}.
\end{equation}
$\mathcal{C}$ is a normalization constant that guarantees that $0\leq
d_{NPCK}(\cdot,\cdot)\leq 1$ and $R$ is the \emph{range} of the ratings,
$R\equiv u-\ell$. We note that $d_{NPCK}(\a^{(1)},\a^{(2)})=0$ indicates a
total agreement between the ratings $\a^{(1)}$ and $\a^{(2)}$, and
$d_{NPCK}(\a^{(1)},\a^{(2)})=1$ indicates a total disagreement between the
ratings $\a^{(1)}$ and $\a^{(2)}$. The normalization is important so that the
distances in problem \eqref{eqn:minnpck} are comparable to each other even when
the individuals rate a different number of objects. The normalization constant
$\mathcal{C}$ was chosen to address the following difficulties: (a) Each of the
distances in problem \eqref{eqn:minnpck} are between a complete rating $\xc$
and an incomplete rating. (b) The number of objects rated by each incomplete
rating are different; therefore the distances in problem \eqref{eqn:minnpck}
are over different dimensional spaces (the distance only considers the objects
rated by the incomplete rating). (c) Distances in higher dimensional spaces
tend to be bigger than distances in lower dimensional spaces; specifically,
observe that the number of summands in equation \eqref{eqn:npck} is the square
of the number of objects rated by the incomplete rating.

In \citep{MC10} the \textit{consensus rating}, $\xc$, is the optimal solution
to the following optimization problem:
\begin{equation}\label{eqn:minnpck}
\min_{\x} \sum_{k=1}^K d_{NPCK}(\a^{(k)},\x).
\end{equation}

The problem of finding the consensus rating is as a special case of the SD
model and therefore solvable in polynomial time.

\subsection{Distance between Incomplete-Rankings}\label{sec:IncompleteRankingAggregation}
Given a set of incomplete rankings, $\set{\b^{(k)}}_{k=1}^K$, the consensus
ranking is defined as the complete ranking closest to the given incomplete
rankings. \citet{KS62} proposed a distance between complete rankings. This
distance function was adapted to incomplete rankings in \citep{MC10}. It was
shown that for a set of desirable properties this adaption, called
\textit{normalized projected Kemeny-Snell distance} (NPKS), is the only one
that satisfies all those properties.

Given two incomplete rankings $\b^{(1)}$ and $\b^{(2)}$, the NPKS distance
between them is calculated as follows:
\begin{equation}\label{eqn:npks}
d_{NPKS}(\b^{(1)},\b^{(2)})=\mathcal{D}\sum_{i\in \sB^{(1)}\bigcap \sB^{(2)}}\sum_{j\in \sB^{(1)}\bigcap \sB^{(2)}}
\frac{1}{2}|\sign(b^{(1)}_i-b^{(1)}_j)-\sign(b^{(2)}_i-b^{(2)}_j)|,
\end{equation}
where $\mathcal{D}=\left(\abs{\sB^{(1)}\bigcap
\sB^{(2)}}^{2}-\abs{\sB^{(1)}\bigcap \sB^{(2)}}\right)^{-1}$. \rem{ERICK: This
is correct and not 'choose 2' because of the double sum instead of sum on
objects pairs. (The $\frac{1}{2}$ is needed to count each reversal as 1 and
each half reversal as 1/2).} $\mathcal{D}$ is a normalization constant that
guarantees that $0\leq d_{NPKS}(\cdot,\cdot)\leq 1$. When
$d_{NPKS}(\b^{(1)},\b^{(2)})=0$ there is a total agreement between $\b^{(1)}$
and $\b^{(2)}$, and when $d_{NPKS}(\b^{(1)},\b^{(2)})=1$ there is a total
disagreement between $\b^{(1)}$ and $\b^{(2)}$. The normalization is important
so that the distances in problem \eqref{eqn:minnpks} are comparable to each
other even when the individuals rank a different number of objects. The
normalization constant $\mathcal{D}$ was chosen to address the following
difficulties: (a) Each of the distances in problem \eqref{eqn:minnpks} are
between a complete ranking $\xo$ and an incomplete ranking. (b) The number of
objects ranked by each incomplete ranking are different; therefore the
distances in problem \eqref{eqn:minnpks} are over different dimensional spaces
(the distance only considers the objects ranked by the incomplete ranking). (c)
Distances in higher dimensional spaces tend to be bigger than distances in
lower dimensional spaces; specifically, observe that the number of summands in
equation \eqref{eqn:npks} is the square of the number of objects ranked by the
incomplete ranking.

The distance $d_{NPKS}(\b^{(1)},\b^{(2)})$  has the following natural
interpretation: The distance between two incomplete rankings is proportional to
the number of {\em rank reversals} between them. Where a rank reversal is
incurred whenever two objects have a different relative order in the rankings
$\b^{(1)}$ and $\b^{(2)}$. Similarly, \textit{half} a rank reversal is incurred
whenever two objects are tied in one ranking but not in the other ranking.

In \citep{MC10} the \textit{consensus ranking}, $\xo$, is the optimal solution
to
\begin{equation}\label{eqn:minnpks}
\min_{\x} \sum_{k=1}^K d_{NPKS}(\b^{(k)},\x).
\end{equation}


\subsection{Convexifying the Rankings Penalty Function}

In contrast to problem \eqref{eqn:minnpck}, problem \eqref{eqn:minnpks} is
NP-hard. We propose here to convexify the nonlinear sign functions in
$d_{NPKS}(\cdot,\cdot)$ as suggested in \citep{MC10}:

\begin{equation} \label{eqn:sepkemrelaxH}
    h^{(k)}_{ij}(z_{ij}) = \begin{cases}
                 \max\set{0,\frac{z_{ij}+1}{2}} & \text{if sign$(b^{(k)}_i-b^{(k)}_j)=-1$} \\
                 \max\set{\frac{-z_{ij}}{2},\frac{z_{ij}}{2}} & \text{if sign$(b^{(k)}_i-b^{(k)}_j)=0$} \\
                 \max\set{\frac{1-z_{ij}}{2},0} & \text{if sign$(b^{(k)}_i-b^{(k)}_j)=1$}
 \end{cases}
\end{equation}

The following formulation is then a convex version of problem
\eqref{eqn:minnpks}:
\begin{subequations}\label{eqn:sepkemrelax}
\begin{optprog}
\optaction[\x,\z]{min} & \objective{\sum_{k=1}^{K}\ \mathcal{D}_k\sum_{i\in \sB^{(k)}}\ \sum_{j\in \sB^{(k)}}\ h^{(k)}_{ij}(z_{ij})} \label{eqn:sepkemrelax_obj} \\
subject to & z_{ij} & = & x_i-x_j & i=1,\dots,n; & j=1,\dots,n\text{.} 
\rem{ERICK: This is correct according to derivation in green research notebook}
\end{optprog}
\end{subequations}


We conclude this section by observing that, for the rankings given by the
judges in the 2007 MSOM's SPC, the optimal solution to convexified problem
\eqref{eqn:sepkemrelax} is a good approximation to the optimal solution of
problem \eqref{eqn:minnpks}. That is, the distance $d_{NPKS}(\cdot,\cdot)$
between the optimal solution to problem \eqref{eqn:minnpks}, (obtained using
the implicit hitting set approach problem of \citealt{KM}), and the optimal
solution to the convex approximation, problem \eqref{eqn:sepkemrelax}, is only
$0.1606$. This is further discussed in Section \ref{sec:results}.

\section{Joint Aggregation of Ratings and Rankings}\label{sec:jointAggregation}
This section describes the model to jointly aggregate the ratings and the
rankings. The goal of this model is not only to fairly represent the judges'
rating and the judges' rankings, but also to balance the cardinal and ordinal
evaluations.  We refer to this optimization model as the \textit{Combined
Aggregate raTing} problem, or (CAT).

The input to (CAT) is a set of ratings $\set{\a^{(k)}}_{k=1}^K$ and a set of
rankings $\set{\b^{(k)}}_{k=1}^K$. (CAT) is a combination of the rating
aggregation problem \eqref{eqn:minnpck} and the ranking aggregation problem
\eqref{eqn:minnpks}. In order to guarantee that ratings rankings weigh equally
in the optimization problem (CAT), both distance functions, $d_{NPCK}$ and
$d_{NPKS}$, are normalized. Note that one can weigh these distances differently
if justified by the circumstances of the decision context. Also, the choice of
$d_{NPCK}$ and $d_{NPKS}$ as penalty functions, or distances, can be replaced
by other distances between incomplete ratings and between incomplete rankings,
respectively.

\begin{optprog}
${\mbox {\rm (CAT)}}\qquad$ \optaction[\x]{min} & \objective{\sum_{k=1}^K
d_{NPCK}\Bigl(\a^{(k)},\x\Bigr)+\sum_{k=1}^K
d_{NPKS}\Bigl(\b^{(k)},\rank{\x}\Bigr)}
\end{optprog}

We next establish that (CAT) is NP-hard by reducing problem \eqref{eqn:minnpks}
(which is NP-hard) to it.

\begin{lemma}
(CAT) is NP-hard.
\end{lemma}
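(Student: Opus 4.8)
The plan is to prove NP-hardness by a polynomial reduction from the incomplete-ranking aggregation problem \eqref{eqn:minnpks}, which is stated above to be NP-hard, to (CAT). The guiding idea is that (CAT) already contains \eqref{eqn:minnpks} as the special case in which no ratings are present: if the rating part of the objective is forced to vanish, the only surviving term is $\sum_{k} d_{NPKS}\bigl(\b^{(k)},\rank{\x}\bigr)$, which is exactly the objective of \eqref{eqn:minnpks} written in terms of the induced ranking $\rank{\x}$ rather than in terms of a ranking variable directly.

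Concretely, given an arbitrary instance of \eqref{eqn:minnpks}, consisting of incomplete rankings $\set{\b^{(k)}}_{k=1}^K$ over the ground set $V=\set{1,\dots,n}$, I would build an instance of (CAT) on the same ground set that (i) keeps the same rankings $\set{\b^{(k)}}_{k=1}^K$, (ii) supplies an \emph{empty} set of ratings, so that the first sum $\sum_{k} d_{NPCK}\bigl(\a^{(k)},\x\bigr)$ is an empty sum and identically zero (using an empty set of ratings, rather than ratings over empty domains, avoids any degeneracy in the normalization constant $\mathcal{C}$), and (iii) fixes the score interval $[\ell,u]$ to be wide enough, e.g. $\ell=1$ and $u=n$, so that $R=u-\ell=n-1$. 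This construction is clearly computable in time polynomial in the size of the input.

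The correctness argument rests on the relationship between the score variable $\x$ of (CAT) and the ranking variable of \eqref{eqn:minnpks}. The key step I would carry out is to show that, under the choice $R\geq n-1$, the map $\x\mapsto\rank{\x}$ sends the integer box $\set{\x\in\Z^n:\ \ell\leq x_i\leq u}$ \emph{onto} the set of all complete rankings of $V$ (with ties permitted). Indeed, any target complete ranking assigns each object a rank in $\set{1,\dots,n}$, and one realizes it by giving larger integer scores to better-ranked objects and equal scores to tied objects; with $n$ available distinct integer values this is always possible. Consequently, as $\x$ ranges over the feasible box, $\rank{\x}$ ranges over every complete ranking, so the optimal value of the constructed (CAT) instance equals $\min_{\xo}\sum_{k}d_{NPKS}\bigl(\b^{(k)},\xo\bigr)$ over all complete rankings $\xo$, which is precisely the optimal value of \eqref{eqn:minnpks}. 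Moreover, from any optimal solution $\x^\star$ of (CAT) one recovers an optimal ranking $\rank{\x^\star}$ for \eqref{eqn:minnpks} in linear time.

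I expect the only genuine subtlety---and hence the main thing to get right---to be this domain-matching step: verifying that every complete ranking, including those with ties and the attendant rank gaps in the convention used for $\rank{\cdot}$, is attained by some integer score vector in the chosen interval, and that no spurious change in the (CAT) objective can arise from the rating term (which is guaranteed here because that term is identically zero). Once surjectivity of $\rank{\cdot}$ is established, the equality of optimal values and the transfer of optimal solutions are immediate, completing the reduction and establishing that (CAT) is NP-hard.
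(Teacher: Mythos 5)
Your proof is correct and follows essentially the same route as the paper's: a polynomial reduction from the NP-hard incomplete-ranking aggregation problem \eqref{eqn:minnpks} in which the ratings are chosen so that the $d_{NPCK}$ term vanishes identically for every $\x$, leaving exactly the ranking objective. The only differences are minor: the paper nullifies the rating term with degenerate singleton ratings (each judge rates exactly one object) rather than an empty rating set, and it leaves implicit the surjectivity of $\x\mapsto\rank{\x}$ onto complete rankings (and the attendant choice of $[\ell,u]$) that you verify explicitly.
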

\begin{proof}
Given an instance of problem \eqref{eqn:minnpks}, a set of incomplete rankings
$\set{\b^{(k)}}_{k=1}^K$), one can transform it (in polynomial time) to an
instance of (CAT) as follows. Keep unchanged $\set{\b^{(k)}}_{k=1}^K$, and
create a set of ratings $\set{\a^{(k)}}_{k=1}^K$ such that each rating
evaluates exactly one object (the choice of object is irrelevant; moreover all
of the ratings can evaluate the same object). From the definition of $d_{NPCK}$
(equation \eqref{eqn:npck}), it follows that, for every $\x$, the first summand
in (CAT) will be equal to $0$. Therefore, with this choice of ratings,
$\rank{\x^*}$, where $\x^*$ is the optimal solution to (CAT), will be the
optimal solution to problem \eqref{eqn:minnpks}.
\end{proof}

The (nonlinear, nonconvex) mathematical programming formulation of (CAT) is
\begin{subequations}\label{eqn:simultnonlinear}
\begin{optprog}
\optaction[\x,\z]{min} & \objective{\sum_{k=1}^{K}\ \mathcal{C}_k\sum_{i\in \sA^{(k)}}\ \sum_{j\in \sA^{(k)}}\ \abs{z_{ij}-p^{(k)}_{ij}} + }\nonumber \\
                         & \objective{\sum_{k=1}^{K}\ \mathcal{D}_k\sum_{i\in \sB^{(k)}}\ \sum_{j\in \sB^{(k)}}\ \frac{1}{2}|\sign(z_{ij})-\sign(b^{(k)}_j-b^{(k)}_i)|  } \label{eqn:nlobj} \\
subject to & z_{ij}     &   =  & x_i-x_j & i=1,\dots,n; & j=1,\dots,n \label{eqn:nl1} \\
           &  \ell      & \leq & x_i \leq u        & i=1,\dots,n \label{eqn:nlbounds} \\
           &  x_i &  \in & \Z                      & i=1,\dots,n. \label{eqn:nlintegraliy}
\end{optprog}
\end{subequations}

The convexification of the objective of problem \eqref{eqn:simultnonlinear}, as
described in Section \ref{sec:IncompleteRankingAggregation}, results in the
convex formulation:

\begin{subequations}\label{eqn:simultconvex}
\begin{optprog}
\optaction[\x,\z]{min} & \objective{\sum_{k=1}^{K}\ \mathcal{C}_k\sum_{i\in
\sA^{(k)}}\
\sum_{j\in \sA^{(k)}}\ \abs{z_{ij}-p^{(k)}_{ij}} + \sum_{k=1}^{K}\ \mathcal{D}_k\sum_{i\in \sB^{(k)}}\ \sum_{j\in \sB^{(k)}}\ h^{(k)}_{ij}(z_{ij})  } \label{eqn:scobj} \\
subject to & z_{ij}     &   =  & x_{i}-x_{j} & i=1,\dots,n; & j=1,\dots,n \label{eqn:sc1} \\
           &  \ell      & \leq & x_i \leq u        & i=1,\dots,n \label{eqn:scbounds} \\
           &  x_i &  \in & \Z                      & i=1,\dots,n \label{eqn:scintegraliy} \\
where, & \objective{
    h^{(k)}_{ij}(z_{ij}) = \begin{cases}
                 \max\set{0,\frac{z_{ij}+1}{2}} & \text{if sign$(b^{(k)}_j-b^{(k)}_i)=-1$} \\
                 \max\set{\frac{-z_{ij}}{2},\frac{z_{ij}}{2}} & \text{if sign$(b^{(k)}_j-b^{(k)}_i)=0$} \\
                 \max\set{\frac{1-z_{ij}}{2},0} & \text{if sign$(b^{(k)}_j-b^{(k)}_i)=1$.}
               \end{cases}}\label{eqn:scH}
\rem{ERICK: This is correct according to derivation in green research notebook}
\end{optprog}
\end{subequations}
Problem \eqref{eqn:simultconvex} is a special case of the convex SD model and
thus solvable in polynomial time.

\textbf{Remark: } Note that in equations \eqref{eqn:nlobj} and \eqref{eqn:scH},
the argument of the $\sign$ function is $b^{(k)}_j-b^{(k)}_i$ and not
$b^{(k)}_i-b^{(k)}_j$ as in equations \eqref{eqn:npks} and
\eqref{eqn:sepkemrelaxH}. This is because of the classical convention that in
the given ratings \textbf{high cardinal numbers} are assigned to the
\textbf{most} preferred objects; while in the given rankings \textbf{high
ordinal numbers} are assigned to the \textbf{least} preferred objects.

The optimal solution to (CAT) is a combined aggregate rating-ranking pair which
is denoted by $\xdc$, and its implied ranking is denoted by $\xdo$.


Next, we propose two mechanisms to identify inconsistencies in the given
evaluations (e.g. outliers, judges that are too lenient or too strict, etc.).
This information is helpful so that (say) the lead decision maker initiates an
investigation of the nature of the discrepancies and acts appropriately (for
example, by discussing these inconsistencies with the judges and promote a
discussion with the objective of alleviating them).

The first mechanism is to use the solution $\xdc$ to identify (a) judges whose
evaluations differ the most with the rest of the evaluations and (b) objects
such that the judges evaluating them had particularly divergent evaluations.
These judges (objects) are those that assigned (received) scores that disagree
the most with $\xdc$. Specifically, we use the separation penalty to identify
the judges whose evaluations are at the farthest distance from $\xdc$ (i.e.,
have the highest separation penalty). Specifically, the contribution of judge
$k$ to the separation penalty is
\begin{equation}
\sum_{i\in \sA^{(k)}} \sum_{j\in \sA^{(k)}} \mathcal{C}_k\abs{(\xdc_i-\xdc_j)-(\a^{(k)}_i-\a^{(k)}_j)}.
\end{equation}
Similarly, we use the separation penalty to identify the objects such that the
judges evaluating them had particularly divergent evaluations. These objects
are those with the highest contribution to the separation penalty. The
contribution of object $i$ to the separation penalty is
\begin{equation}
\sum_{k|i\in \sA^{(k)}} \sum_{j\in \sA^{(k)}} \mathcal{C}_k\abs{(\xdc_i-\xdc_j)-(\a^{(k)}_i-\a^{(k)}_j)}.
\end{equation}

The second mechanism to identify inconsistencies in the given evaluations is
based on Brans and Vincke's PROMETHEE method \citep{BV85}. The mechanism is to
aggregate the consensus rating $\xc$ (solution to problem \eqref{eqn:minnpck})
and the consensus rating $\xo$ (solution to problem \eqref{eqn:minnpks}) into a
partial order $(P,T,I)$ as follows:
\begin{subequations}\label{eqn:partial}
\begin{align}
& a \text{ is preferred to } b\ (a\ P\ b) && \text{ if } && \begin{cases}
\xc(a)>\xc(b) \text{ and } \xo(a)\geq\xo(b)\\
\xc(a)\geq\xc(b) \text{ and } \xo(a)>\xo(b)\end{cases}\\
& a \text{ and } b \text{ are tied } (a\ T\ b) && \text{ if } && \xc(a)=\xc(b) \text{ and } \xo(a)=\xo(b)\\
& a \text{ and } b \text{ are incomparable } (a\ I\ b) && \text{  } && \text{otherwise.}
\end{align}
\end{subequations}
Thus, by construction, the partial order $(P,T,I)$ summarizes the agreement (or
lack thereof) between the consensus rating $\xc$ and the consensus ranking
$\xo$.

Section \ref{sec:results} illustrates these mechanisms and their usefulness for
identifying objects whose evaluations deserve special attention/further
discussion.

\section{Results}\label{sec:results}
We illustrate here how to use the proposed mechanism in the ranking of the
contestants of the 2007 MSOM's SPC. These results are compared to those
obtained by aggregating only the cardinal evaluations, and those obtained by
aggregating only the ordinal evaluations.

Table \ref{tab:results} gives the consensus rating (optimal solution to problem
\eqref{eqn:minnpck}) $\xc$; the (approximate) consensus ranking (optimal
solution to problem \eqref{eqn:sepkemrelax}) $\xo$; and, the combined aggregate
rating $\xdc$ and ranking $\xdo$ (optimal solutions to problem
\eqref{eqn:simultconvex}).

\begin{table}[H]
\centering \caption{Aggregate ratings and rankings for the 2007 MSOM's
SPC.}\label{tab:results}
\tiny\begin{tabular}{r|rrrrrr|rrrr}%
{\bf Paper} & {\bf $\xc$} & {\bf $\xo$} & {\bf $\xdc$} & {\bf $\xdo$} &
& {\bf Paper} & {\bf $\xc$} & {\bf $\xo$} & {\bf $\xdc$} & {\bf
$\xdo$}\\\cline{1-5}\cline{7-11}
         1 &          3 &         41 &          4 &         41 &            &         30 &          5 &         24 &          5 &         23 \\
         2 &        5.5 &         24 &          5 &         23 &            &         31 &          6 &         24 &          5 &         23 \\
         3 &          5 &         41 &          4 &         41 &            &         32 &        5.5 &         24 &          5 &         23 \\
         4 &        5.5 &         24 &          5 &         23 &            &         33 &        5.5 &         41 &          4 &         41 \\
         5 &        4.5 &         41 &          4 &         41 &            &         34 &        6.5 &          3 &          7 &          2 \\
         6 &        6.5 &          9 &          6 &          8 &            &         35 &          5 &         41 &          4 &         41 \\
         7 &        6.5 &         24 &          5 &         23 &            &         36 &        5.5 &         24 &          5 &         23 \\
         8 &          6 &          9 &          6 &          8 &            &         37 &          5 &         24 &          5 &         23 \\
         9 &        5.5 &          9 &          6 &          8 &            &         38 &          3 &         41 &          4 &         41 \\
        10 &          6 &          3 &          7 &          2 &            &         39 &          7 &          9 &          6 &          8 \\
        11 &          6 &          3 &          7 &          2 &            &         40 &          5 &         41 &          4 &         41 \\
        12 &          5 &         24 &          5 &         23 &            &         41 &        6.5 &          3 &          6 &          8 \\
        13 &        6.5 &          9 &          6 &          8 &            &         42 &          6 &         24 &          5 &         23 \\
        14 &        7.5 &          9 &          6 &          8 &            &         43 &          6 &          9 &          6 &          8 \\
        15 &          5 &         41 &          4 &         41 &            &         44 &        4.5 &          9 &          5 &         23 \\
        16 &          4 &         53 &          3 &         53 &            &         45 &        5.5 &         24 &          5 &         23 \\
        17 &        6.5 &          9 &          6 &          8 &            &         46 &          6 &          9 &          6 &          8 \\
        18 &        3.5 &         53 &          3 &         53 &            &         47 &          6 &          9 &          6 &          8 \\
        19 &        5.5 &         24 &          5 &         23 &            &         48 &        6.5 &          9 &          6 &          8 \\
        20 &        2.5 &         53 &          2 &         58 &            &         49 &        7.5 &          2 &          7 &          2 \\
        21 &        4.5 &         41 &          4 &         41 &            &         50 &        4.5 &         53 &          3 &         53 \\
        22 &          4 &         41 &          4 &         41 &            &         51 &        5.5 &          9 &          6 &          8 \\
        23 &        4.5 &         41 &          4 &         41 &            &         52 &        4.5 &         41 &          4 &         41 \\
        24 &        5.5 &         24 &          5 &         23 &            &         53 &        5.5 &         24 &          5 &         23 \\
        25 &          5 &         24 &          5 &         23 &            &         54 &          7 &          3 &          7 &          2 \\
        26 &        6.5 &         24 &          5 &         23 &            &         55 &        4.5 &         53 &          3 &         53 \\
        27 &        7.5 &          9 &          6 &          8 &            &         56 &        6.5 &          3 &          7 &          2 \\
        28 &        4.5 &         53 &          3 &         53 &            &         57 &          7 &          1 &          8 &          1 \\
        29 &          6 &          9 &          6 &          8 &            &         58 &          6 &         24 &          5 &         23 \\
\end{tabular}%
\end{table}

In Table \ref{tab:results}, the consensus rating $\xc$ is non-integral because
some of the judges assigned fractional scores (in particular they assigned
grades that are multiple of $1/2$). To appropriately handle the judges'
fractional grades, we decided to set the `grading unit' to 1/2. From an
optimization point of view, this represents no problem, since the
separation-deviation problem can be solved in any pre-specified precision.

Next we give a specific example of objects/papers whose ratings and ranking are
in conflict with several other objects/papers. In particular, paper 14 has the
highest consensus score, however this conflicts with several papers (e.g.,
paper 54) that have a lower consensus score but a higher consensus rank. The
evaluations received by papers 14 and 54 are given in Table
\ref{tab:scores14and54}. The number of papers reviewed by each judge and the
average Field Contribution (FC) they gave are given in Table
\ref{tab:judges14and54}. The adjusted FC, obtained by dividing the paper's FC
by the judge's average FC, is given in Table \ref{tab:adjusted14and54}. From
these tables we observe the following:
\begin{compenum}
\item The ordinal evaluations of paper 54 seem better than those of paper 14.
    This explains in part why paper 54 has a better consensus rank than paper
    14.
\item The average FC of paper 14 (5.6) is only slightly bigger than that of
    paper 54 (5.5). This explains in part why paper 14 has a better consensus
    score than paper 54.
\item It seems that judge 44, who evaluated paper 14, was
    remarkably lenient, while judge 30, who evaluated paper 14, was
    remarkably strict. This suggests that the FC of '5' given by
    these two judges is not comparable. Note that the adjusted FC of
    paper 14-judge 44 is of 0.71; while the adjusted FC of
    paper 54-judge 30 is of 1.39. Moreover, the average adjusted FC of
    paper 14 and 54 are 1.10 and 1.28, respectively.
\item All of this suggests that paper 54 deserves a collective
    evaluation better than that of paper 14.
\end{compenum}

In the combined aggregate rating-ranking pair, $\xdc$, which is the optimal
solution to (CAT), and in its implied ranking $\xdo$, paper 54 is rated and
ranked higher than paper 14; this, as discussed previously, seems appropriate.
In contrast, the consensus rating $\xc$ ranks paper 14 higher than 54.  This
provides some evidence that indeed the combined aggregate rating-ranking pair
better represents the judges' evaluations/opinions than the consensus rating
$\xc$, which takes into consideration only the ratings provided by the judges.

\begin{table}[H]
\begin{minipage}[t]{0.5\linewidth}
   \centering
   \vspace{3ex}
   \caption{Evaluations of papers 14 and~54.} \label{tab:scores14and54}
   \vspace{3ex}
   \small\begin{tabular}{cc|cc}%
        {\bf }&     {\bf } & {\bf Field}        & {\bf } \\
   {\bf      }&{\bf      } & {\bf Contribution} & {\bf Paper} \\
   {\bf Paper}&{\bf Judge} & {\bf    Score    } & {\bf Ranking} \\\hline
           14 &         35 &          6 &          1 \\
           14 &         23 &          6 &          1 \\
           14 &         48 &          7 &          1 \\
           14 &         57 &          4 &          4 \\
           14 &         44 &          5 &          4 \\\hline
           54 &         30 &          5 &          1 \\
           54 &         32 &          4 &          4 \\
           54 &         25 &          6 &          1 \\
           54 &         22 &          7 &          1
      \end{tabular}%
\end{minipage}\hspace{0.04\linewidth}
\begin{minipage}[t]{0.45\linewidth}
   \centering
   \caption{Evaluation statistics of the judges that evaluated papers 14 and 54.}\label{tab:judges14and54}
   \small\begin{tabular}{c|cc}%
       {\bf } & {\bf Number} & {\bf Average} \\
       {\bf } & {\bf of Papers} & {\bf Field} \\
   {\bf Judge} & {\bf Evaluated} & {\bf Contribution} \\   \hline
           35 &          4 &       4.50 \\
           23 &          4 &       4.25 \\
           48 &          4 &       5.25 \\
           57 &          4 &       5.75 \\
           44 &          5 &       7.00 \\\hline
           30 &          5 &       3.60 \\
           32 &          4 &       5.25 \\
           25 &          5 &       4.00 \\
           22 &          4 &       4.75
      \end{tabular}%
\end{minipage}
\end{table}

\begin{table}[H]
\centering
\begin{minipage}[t]{0.45\linewidth}
   \centering
   \caption{Adjusted Field Contribution received by papers 14 and 54.}\label{tab:adjusted14and54}
   \small\begin{tabular}{cc|c}%
       {\bf } &     {\bf } & {\bf Adjusted} \\
   {\bf Paper} & {\bf Judge} & {\bf Field Contribution} \\\hline
           14 &         35 &       1.33 \\
           14 &         23 &       1.41 \\
           14 &         48 &       1.33 \\
           14 &         57 &       0.70 \\
           14 &         44 &       0.71 \\\hline
           54 &         30 &       1.39 \\
           54 &         32 &       0.76 \\
           54 &         25 &       1.50 \\
           54 &         22 &       1.47 \\
   \end{tabular}%
\end{minipage}
\end{table}

Next, we use the partial order $(P,T,I)$ (created as described in Section
\ref{sec:jointAggregation}) to highlight the discrepancies between the
consensus rating $\xc$ and the consensus ranking $\xo$. Figure
\ref{fig:conflicts} gives a graphical representation of the partial order that
highlights the pairs of objects where $\xc$ and $\xo$ disagree on their
relative order (that is, those object pairs that are members of the set $I$ in
the partial order $(P,T,I)$).

From Figure \ref{fig:conflicts} we observe the following: (a) Paper 14 has the
highest consensus score, however this conflicts with several papers (e.g.,
paper 54) that have a lower consensus score but a higher consensus rank (this
agrees with the analysis given above). (b) Paper 20 (lower left corner of
Figure \ref{fig:conflicts}) should receive the lowest consensus evaluation. (c)
Although the agreement between $\xc$ and $\xo$ is not perfect, there are
subsets of papers should receive a lower (or higher) collective evaluation than
others. For example, the papers $\set{1,38,18,16,22,28,50,55}$ should receive a
collective evaluations higher than that of paper 20; lower than or equal to
that of papers $\set{5,21,23,52}$; and lower than the rest of the papers.


\begin{landscape}
\begin{figure}[H]
  \centering
  \caption{The papers (circled) are ordered
  (top to bottom) in decreasing consensus score. There is an arc
  between two papers whenever the lower rated paper has a better
  ranking than a higher rated paper.}
  \label{fig:conflicts}
  \includegraphics[width=8.62in]{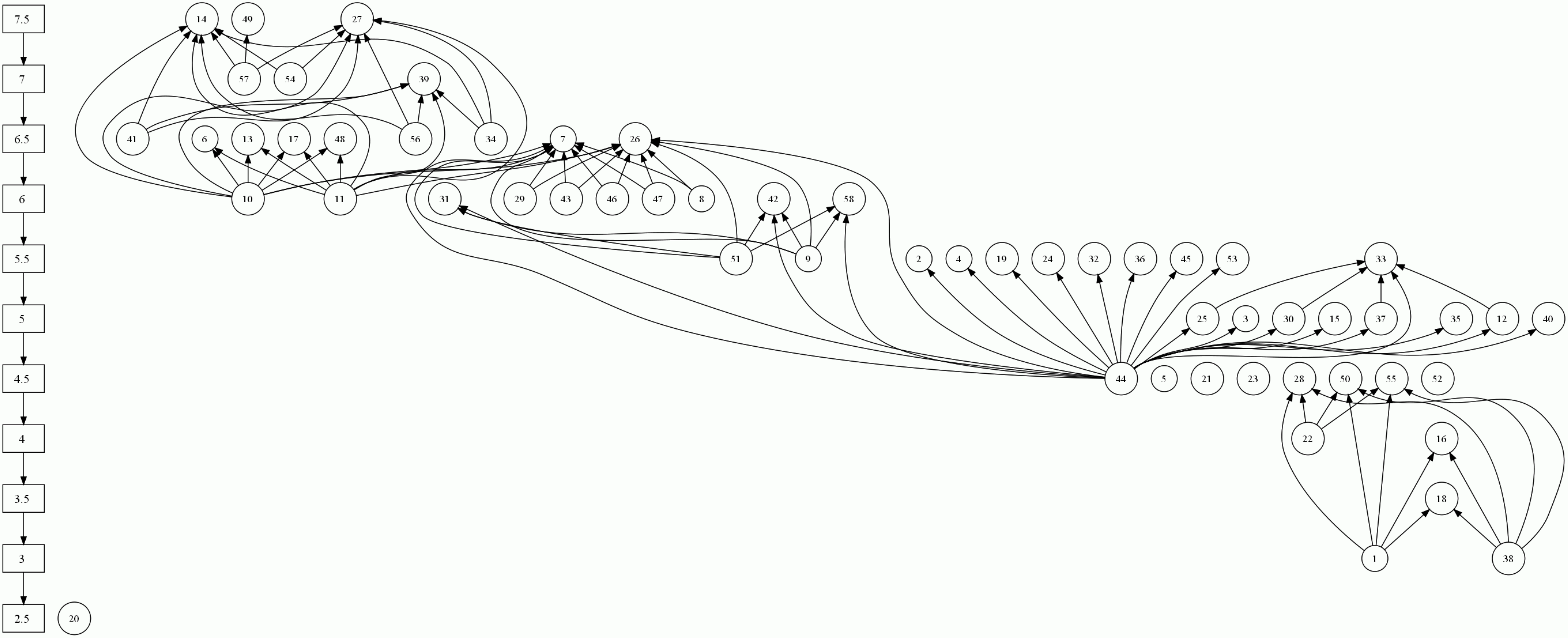}\\
\end{figure}
\end{landscape}

In the 2007 MSOM's SPC, papers 38, 14, 10, 1 and 42 had the highest
contributions to the separation penalty. As noted previously, this indicates
that these papers are those whose evaluations are not consistent/deserve
further discussion. For example, paper 38---a very low rated paper in the
consensus rating---received scores from 2 to 5 and was ranked by all but one of
the judges as their least preferred paper (see Tables \ref{tab:eval38} and
\ref{tab:judges38}). In particular, paper 38 was the second most preferred
paper of judge 9; perhaps because this judge received other papers with less
quality than paper 38? We believe this is not the case since, as shown in Table
\ref{tab:judge9}, the paper ranked last by judge 9 was paper 10. As noted
above, paper 10 is also among the highest contributors to the separation
penalty. Paper 10 received three high evaluations and 2 very low evaluations
(see Table \ref{tab:paper10}). Therefore, we believe that, in order to get a
better consensus, the scores/ranks of paper 38 and paper 10 should be discussed
by the judges assigned to these two papers.

\begin{table}[H]
\begin{minipage}[t]{0.5\linewidth}
   \centering
   \caption{Evaluations of paper 38.} \label{tab:eval38}
   \small\begin{tabular}{c|cc}%
     {\bf } & {\bf Field}        & {\bf } \\
{\bf      } & {\bf Contribution} & {\bf Paper} \\
{\bf Judge} & {\bf    Score    } & {\bf Ranking} \\\hline
         30 &          3 &          5 \\
         41 &          2 &          5 \\
         44 &          3 &          5 \\
          9 &          5 &          2 \\
         20 &          5 &          4
    \end{tabular}%
\end{minipage}
\hspace{0.04\linewidth}
\begin{minipage}[t]{0.45\linewidth}
   \centering
   \caption{Evaluation statistics of the judges that evaluated paper 38.}\label{tab:judges38}
   \small\begin{tabular}{c|cc}%
       {\bf } & {\bf Number} & {\bf Average} \\
       {\bf } & {\bf of Papers} & {\bf Field} \\
   {\bf Judge} & {\bf Evaluated} & {\bf Contribution} \\   \hline
           30 &          5 &       3.60 \\
           41 &          5 &       5.00 \\
           44 &          5 &       7.00 \\
            9 &          5 &       4.60 \\
           20 &          4 &       7.25
      \end{tabular}%
\end{minipage}
\end{table}

\begin{table}[H]
\begin{minipage}[t]{0.5\linewidth}
   \centering
   \caption{Evaluations of judge 9.}\label{tab:judge9}
   \small\begin{tabular}{cc|c}%
     {\bf } & {\bf Field}        & {\bf } \\
{\bf      } & {\bf Contribution} & {\bf Paper} \\
{\bf Paper} & {\bf    Score    } & {\bf Ranking} \\\hline
           10 &         3 &       5 \\
           19 &         4 &       3 \\
           38 &         5 &       2 \\
           50 &         4 &       3 \\
           58 &         7 &       1
   \end{tabular}%
\end{minipage}
\hspace{0.04\linewidth}
\begin{minipage}[t]{0.45\linewidth}
   \centering
   \caption{Evaluations of paper 10.}\label{tab:paper10}
   \small\begin{tabular}{cc|c}%
     {\bf } & {\bf Field}        & {\bf } \\
{\bf      } & {\bf Contribution} & {\bf Paper} \\
{\bf Judge} & {\bf    Score    } & {\bf Ranking} \\\hline
           33 &         7 &       1 \\
           41 &         7 &       1 \\
           19 &         2 &       3 \\
           15 &         6 &       1 \\
            9 &         3 &       5
   \end{tabular}%
\end{minipage}
\end{table}

Next we analyze the combined aggregate rating $\xdc$ and ranking $\xdo$
(solution to problem \eqref{eqn:simultconvex}). We make the following
observations:
\begin{compenum}
\item The consensus rating, $\xc$, has a total rating
    distance (equation \eqref{eqn:minnpck}) of 7.3611.
\item The consensus ranking, $\xo$, has a total ranking
    distance (equation \eqref{eqn:minnpks}) of 13.8500.
\item
    \begin{compenum}
    \item The combined aggregate rating, $\xdc$, has a total rating
    distance (equation \eqref{eqn:minnpck}) of 8.16667.
    \item The combined aggregate ranking, $\xdo$, has a total ranking
    distance (equation \eqref{eqn:minnpks}) of 13.9333.
    \end{compenum}
\end{compenum}
This shows that, in this case, the combined aggregate rating $\xdc$ and ranking
$\xdo$ achieve a very good compromise. In particular, $\xdc$ remains almost as
close as the consensus rating $\xc$ to the judges' ratings, and $\xdo$ remains
almost as close as the consensus ranking $\xdo$ to the judges' rankings.



\section{Concluding Remarks}\label{sec:conclusion}
We propose here a new framework for group decision making that aggregates both
cardinal and ordinal input evaluations (referred to as ratings and rankings,
respectively). Our framework consists on finding the rating-ranking pair that
minimizes the sum of the rating-distances from the rating to the given ratings
plus the sum of the ranking-distances from the ranking to the given rankings.

The effectiveness of the new framework is illustrated by ranking the
contestants of the 2007 MSOM's student paper competition. We provide evidence
that obtaining a combined aggregate cardinal and ordinal evaluations better
represents the judges' opinions as compared to a rating that aggregates only
the judges' cardinal evaluations or only the judges' ordinal evaluations.

Aggregating incomplete evaluations is challenging because the aggregate
evaluation is prone to be biased by the judges' subjective scales; for example,
objects assigned to a particularly strict (lenient) judge have an advantage
(disadvantage) compared to those objects not assigned to this specific judge.
Our framework identifies these inconsistencies in the given evaluations. This
information is helpful so that the lead decision maker can initiate an
investigation of the nature of the conflicts and act accordingly (for example,
by having the specific judges discuss, and possibly resolve, these
inconsistencies).

The problem of aggregating complete evaluations (in which all judges evaluate
all objects) is a special case of the problem of aggregating incomplete
evaluations (in which the judges are allowed to evaluate only some of the
objects). Therefore our framework is also applicable to aggregating complete
evaluations.

\section*{Acknowledgements}
The authors gratefully acknowledge J\'er\'emie Gallien, head judge of the 2007
MSOM's student paper competition, for using our methodology to aggregate the
judges evaluations in the competition.

The research of the first author is supported in part by NSF award No. CMMI-1760102.
The research of the second author is supported in part by NSF award No. OAC-1835499.

\bibliographystyle{chicago}
\bibliography{RefsSPC}

\end{document}